\definecolor{cvprblue}{rgb}{0.21,0.49,0.74}
\theoremstyle{plain}
\newtheorem{theorem}{Theorem}
\newtheorem{proposition}[theorem]{Proposition}
\title{Likelihood-Separable Diffusion Inference for Multi-Image MRI Super-Resolution}
\author{
    Samuel~W.~Remedios\\
    Johns Hopkins University\\
    Baltimore, MD, USA\\
    {\tt\small samuel.remedios@jhu.edu}
    \and
    Zhangxing~Bian\\
    Johns Hopkins University\\
    Baltimore, MD, USA\\
    \and
    Shuwen~Wei\\
    Johns Hopkins University\\
    Baltimore, MD, USA\\
    \and
    Aaron~Carass\\
    Johns Hopkins University\\
    Baltimore, MD, USA\\
    \and
    Jerry~L.~Prince\\
    Johns Hopkins University\\
    Baltimore, MD, USA\\
    \and
    Blake~E.~Dewey\\
    Johns Hopkins University\\
    Baltimore, MD, USA\\
}
\begin{document}

\twocolumn[{
\renewcommand\twocolumn[1][]{#1}  
\maketitle
\begin{center}
    \includegraphics[width=\textwidth]{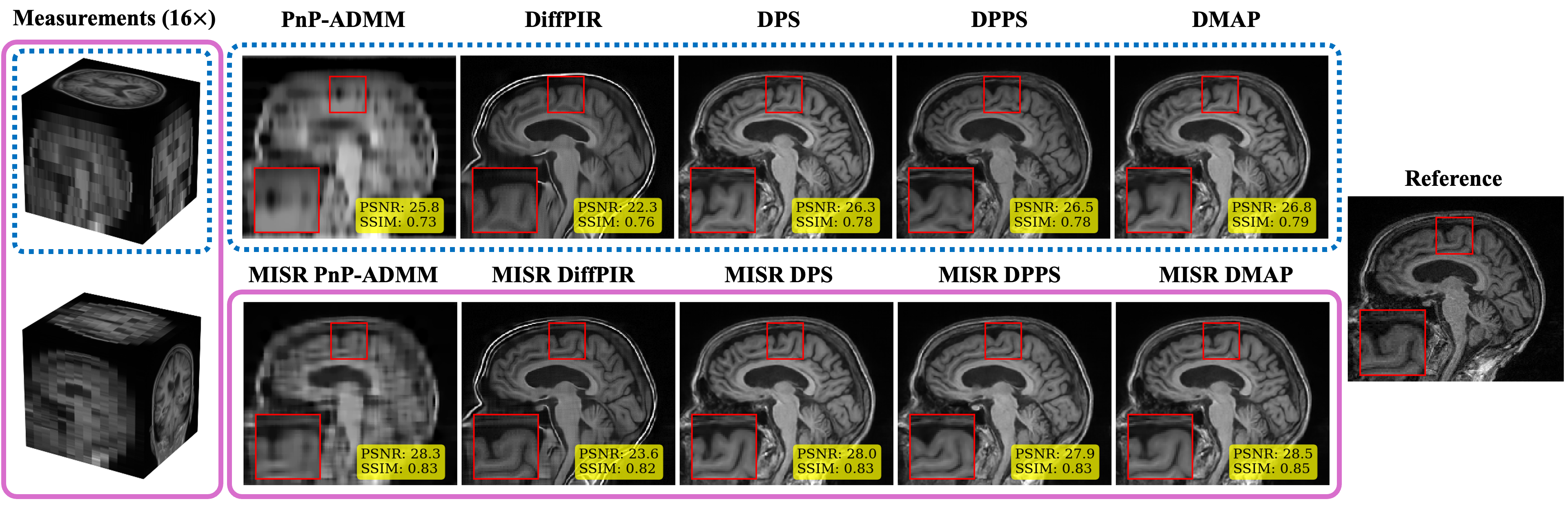}
    \captionof{figure}{
        \textbf{Overview of our proposed MISR generalization to diffusion super-resolution.} Our method generalizes existing methods for multi-image super-resolution~(MISR). The first column shows low-resolution~(LR) anisotropic MRI measurements with 16$\times$ downsampling along orthogonal planes. For super-resolution results, the first row shows the existing methods for single-image super-resolution~(SISR), which only use the LR measurement enclosed in the blue dotted box. The second row shows results for our proposed generalization of each method for MISR, which use both LR measurements enclosed in the pink solid box. All methods used the same 3D volumetric diffusion model, and all results are 3D volumetric images. A single sagittal slice is shown for viewing clarity. Volumetric PSNR and SSIM are overlaid in the yellow box, and zoomed inset regions are shown in red. MISR improves performance for all methods and better estimates the anatomy of the reference image, shown on the far right.
    }
    \label{fig:teaser}
\end{center}
\vspace{1em}
}]

\begin{abstract}
Diffusion models are the current state-of-the-art for solving inverse problems in imaging.
Their impressive generative capability allows them to approximate sampling from a prior distribution, which alongside a known likelihood function permits posterior sampling without retraining the model.
While recent methods have made strides in advancing the accuracy of posterior sampling, the majority focuses on single-image inverse problems.
However, for modalities such as magnetic resonance imaging~(MRI), it is common to acquire multiple complementary measurements, each low-resolution along a different axis.
In this work, we generalize common diffusion-based inverse single-image problem solvers for multi-image super-resolution~(MISR) MRI.
We show that the DPS likelihood correction allows an exactly-separable gradient decomposition across independently acquired measurements, enabling MISR without constructing a joint operator, modifying the diffusion model, or increasing network function evaluations.
We derive MISR versions of DPS, DMAP, DPPS, and diffusion-based PnP/ADMM, and demonstrate substantial gains over SISR across $4\times/8\times/16\times$ anisotropic degradations.
Our results achieve state-of-the-art super-resolution of anisotropic MRI volumes and, critically, enable reconstruction of near-isotropic anatomy from routine 2D multi-slice acquisitions, which are otherwise highly degraded in orthogonal views.
\end{abstract}

\section{Introduction}
\label{sec:intro}
Super-resolution~(SR) is a linear inverse problem that estimates a candidate high-resolution~(HR) image given an observed low-resolution~(LR) image~\cite{Buades-2005-NLM,rousseau2008brain,Rousseau-2010-NLMIntermodality,Jafari-Khouzani-2014-NLM,Manjon-2010-SRSelfSimilarity,Zhang-2015-Sparse,willoughby1979solutions,Xin-2008-Tikhonov,lepcha2023image,Woo-2012-TongueSR,Dong-2016-SRCNN,lu2022transformer,yang2020learning,yan2021smir,li2022transformer,feng2021task,zhou2023hybrid,forigua2022superformer,cheng2023hybrid,fang2022hybrid,wu2023super,wang2023inversesr,wu2024anires2d,chang2024high,han2024arbitrary,wu20243d,bora2017compressed,menon2020pulse,daras21_ilo,daras22-score-ilo}.
Mathematically, SR uses the observed LR image $y\in\mathbb{R}^m$~(the \textit{measurement}) to estimate the HR image $x\in\mathbb{R}^n$ given the forward model 
\begin{equation}
    y = Ax+\epsilon,
\label{eq:forward}
\end{equation} where $A\in\mathbb{R}^{m\times n}$ is the linear forward operator and $\epsilon \sim \mathcal{N}(0, \sigma\mathbf{I}_m)$.
Since $m < n$, Eq.~\ref{eq:forward} is underdetermined and noisy, multiple solutions exist; the problem is \textit{ill-posed}.
Some form of regularization is required to select a solution.

While classical approaches use hand-crafted regularization priors such as total variation~\cite{willoughby1979solutions,Xin-2008-Tikhonov} or sparsity under some transform~
\cite{donoho2006compressed,candes2006robust,sen2009compressive,saafin2015image}, contemporary data-driven approaches use denoisers as a plug-and-play prior~\cite{wang2023ddnm,song2023pseudoinverse,kawar2022denoising,kawar2021snips,choi_ILVR}.
This practice began with generic denoising algorithms~\cite{venkatakrishnan2013plug,chan2016plug,romano2017little} but has recently found deep theoretic connections with the family of diffusion models, including denoising diffusion probabilistic models~(DDPMs)~\cite{ho2020denoising}, score-based models~\cite{songscore}, flow models~\cite{liuflow,gao2025diffusionmeetsflow}, and Schr{\"o}dinger bridges~\cite{de2021diffusion,shi2023diffusion}.
These approaches learn a generative model that transforms samples from one distribution~(often a high-dimensional standard Gaussian) to the data distribution.
In this way, diffusion models act as a Bayesian prior on the desired data distribution $p(x)$.
Accordingly, inverse problem solving algorithms leverage Bayes' rule to sample from the posterior:
\begin{equation}
    p(x|y) \propto p(y|x)p(x),
\label{eq:bayes}
\end{equation}
where the likelihood term $p(y | x)$ is approximated by the known measurement function Eq.~\ref{eq:forward}.


This insight has spurred great developments in the use of diffusion models for inverse problem solving.
The recipe for solving any inverse problem with Eq.~\ref{eq:bayes} requires only three ingredients: a known forward operator $A$, a pretrained diffusion model to approximate $p(x)$, and a technique to sample from the posterior $p(y|x)p(x)$.
Most works assume that $A$ is known, assume the distribution of $\epsilon$ is known, and use an existing well-trained diffusion model.
Thus, algorithm development focuses on the sampling procedure.
Diffusion Posterior Sampling~(DPS)~\cite{dps}, which corrects samples on the diffusion trajectory towards the data-consistent manifold via a gradient update, is possibly the most well-known.

Most work on SR has focused on single-measurement inverse problem solving.
However, in problem settings such as magnetic resonance imaging~(MRI), multiple measurements are common.
For example, an MRI session may include multiple scans along orthogonal axes.
These occur for a multitude of reasons, such as improving signal-to-noise ratio, mitigating motion artifacts, or due to special MRI pulse sequences designed for slice-wise acquisition.
Such sessions often result in multiple image volumes that are HR along two axes~(called \textit{in-plane}) but LR along the third axis~(the \textit{through-plane}).
These \textit{anisotropic} image volumes are faster to acquire, but by trading signal-to-noise~(SNR) and in-plane resolution for a degraded through-plane resolution, the image volumes are not clinically useful when viewed through-plane.
Radiologists therefore do not read anisotropic volumes in the through-planes, as important anatomical structures are blurred and/or decimated.

To leverage the complementary information available in this real-world setting, we propose to generalize diffusion inverse problem solving approaches for the multi-image super-resolution~(MISR) problem.
In this paper, we show that diffusion models can accurately estimate the underlying isotropic anatomy without retraining for specific resolutions.
To our knowledge, no prior work in MRI or diffusion inverse problems has shown that multiple anisotropic volumes can be restored to isotropy.
This positions MISR diffusion as a practical, high-impact alternative to long 3D MRI scans.

Our main contributions are as follows:
\begin{itemize}
    \item \textbf{Likelihood separability for diffusion posterior sampling.}
    We show that the likelihood gradient decomposes exactly across independent measurements, permitting MISR correction using only per-measurement operators.
    This enables multi-image posterior correction without a joint operator or modification of the diffusion model.
    \item \textbf{A unified MISR generalization of diffusion inverse solvers.}
    We derive MISR versions of DPS, DMAP, DPPS, and diffusion-based PnP/ADMM.
    \item \textbf{Noise weighting for heterogeneous MRI acquisitions.}
    We introduce inverse-variance weighting of per-measurement gradients, enabling principled fusion of measurements with different slice thicknesses, SNRs, and spatial resolutions.
    \item \textbf{State-of-the-art MISR MRI performance.}
    Across $4\times/8\times/16\times$ through-plane degradations, our MISR extensions substantially outperform their single-image counterparts, improving PSNR by 1 to 3 dB while producing anatomically faithful reconstructions.
    Evaluations were conducted with $64$ network function evaluations~(NFEs) for the reverse diffusion process, yielding a total super-resolution time per volume of less than 60 seconds on an NVIDIA RTX 6000 ADA.
\end{itemize}
When taken together, our contributions establish diffusion models as a practical, flexible, and high-quality approach for multi-image MRI super-resolution without retraining.

\section{Background}
\label{sec:background}

\subsection{Diffusion models}
\label{sec:diffusion}
Diffusion-based models are the current state of the art in generative modeling~\cite{cao2024survey}.
They include the denoising diffusion probabilistic model~(DDPM)~\cite{ho2020denoising}, the denoising diffusion implicit model~(DDIM)~\cite{song2021denoising}, score-based models~\cite{dps,chung2022improving}, Schr\"odinger bridges~\cite{chen2022likelihood,shi2023diffusion,su2023dual}, and Brownian bridges~\cite{li2023bbdm,lee2024ebdm,choo2024slice}.
There are also connections between diffusion-based models and flow matching~\cite{liu2023flow,lipman2023flow,gao2025diffusionmeetsflow}.

Diffusion models establish a Markov chain of transition states between the data distribution and a noise distribution, typically a high-dimensional Gaussian.
By convention, we will denote a state at the data distribution by $x_0$ and a state at the noise distribution by $x_1$.
Thus, arbitrary transition states are denoted by $x_t$ for $t\in[0, 1]$.
A state $x_s$ is closer to the data distribution than $x_t$ whenever $s < t$.
The forward diffusion process describes how data diffuses into noise and is of closed form.
The reverse diffusion process describes how to denoise each transition state and is parameterized by a neural network.
The goal in training a diffusion model is to find parameters $\theta$ that allow the neural network $f_\theta(x_t, t)$ to move from transition state $x_t$ at time $t$ to a less-noisy state $x_s$.
Once trained, at inference time, a sample $x_1$ is drawn and successively transformed through a series of transition states until arriving at the data state $x_0$.

The forward diffusion process defines $x_t$ as a linear combination between data and noise:
\begin{equation}
  x_t = \alpha_t x_0 + \beta_t \epsilon,
\end{equation}
where $\alpha_t$ and $\beta_t$ are parameters for the noise schedule and $\epsilon \sim \mathcal{N}(\mathbf{0}, \mathbf{I})$ by definition.
The reverse diffusion process is the following:
\begin{equation}
  x_s = \alpha_s \hat{x}_0 + \beta_s \hat{\epsilon},
\label{eq:ddpm-ddim}
\end{equation}
where $\hat{\epsilon} = (x_t - \alpha_t \hat{x}_0) / \beta_t$ and $\hat{x}_0$ is estimated by the diffusion network $f_\theta$.
Equation~\ref{eq:ddpm-ddim} is the DDIM sampler~\cite{DDIM}, which generalizes the DDPM framework~\cite{ho2020denoising} and allows accelerated and deterministic sampling.
The full reverse diffusion process arrives at $t=0$ by iteratively applying Eq.~\ref{eq:ddpm-ddim}.

While the methods presented in this paper are agnostic to the specific way $\hat{x}_0$ is obtained~(sample estimation, noise estimation, velocity estimation, etc.~\cite{gao2025diffusionmeetsflow}), our work will specifically make use of the flow estimation framework~\cite{liu2023flow,lipman2023flow,gao2025diffusionmeetsflow}.
That is, the network $f_\theta$ estimates the flow-matching field: $\hat{u} = \hat{\epsilon} - \hat{x}_0 = f_\theta(x_t, t)$, so
\begin{equation}
  \hat{x}_0 = \hat{\epsilon} - f_\theta(x_t, t)
\label{eq:flow_x0}
\end{equation}
and
\begin{equation}
  \hat{\epsilon} = f_\theta(x_t, t) + \hat{x}_0.
\label{eq:flow_eps}
\end{equation}

Due to the empirical success of diffusion models, much theoretical work has been done to connect schools of thought.
There are now many leading perspectives on diffusion models, including the score, variational, flow-matching, and Schr\"odinger bridge perspectives.

\subsection{2D Magnetic Resonance Image Acquisition}
\label{sec:mri_intro}
In MRI, image volumes are rarely obtained alone.
Usually, imaging sessions acquire multiple images of different contrasts, orientations, and resolutions.
We will consider the common scenario of an MRI session where the same pulse sequence is used to acquire multiple images along orthogonal axes, each potentially with different slice thicknesses.
In this case, image volumes are formed by \textit{2D acquisition}~(also called \textit{2D MRI} or \textit{2D multi-slice MRI}).
In 2D MRI acquisitions, samples are acquired in 2D k-space after exciting a slab within the field of view using slice selection~\cite{brown2014magnetic}.
Different regions are selected by varying the slice selection parameters, resulting in a set of independently acquired slices.
Each of these slices is converted to the image domain with the inverse Fourier transform and then stacked along the through-plane axis to create a 3D volume.

Slice selection determines the through-plane resolution.
The physics of slice selection allow us to write the relationship between HR and LR images along one dimension.
In Eq.~\ref{eq:forward}, $A$ can be written as a strided convolution using the slice selection profile and the separation between slices in millimeters.
In practice, the slice selection profile is proprietary and its closed-form equation is unknown.
However, it is known that a Gaussian kernel serves as an reliable approximation.
In this work, we assume that the slice selection uses Gaussian profiles with slice separations and full-with-at-half-maxes equal to the scale factor. 

\subsection{Diffusion posterior sampling for SISR}
\label{sec:sisr_mri}
Diffusion Posterior Sampling~(DPS)\cite{dps} uses Jensen's approximation for the likelihood term in Eq.~\ref{eq:bayes}: $p(y\mid x_t)\approx p(y\mid\mathbb{E}[x_0|x_t])$.
This is convenient, since computing $\mathbb{E}[x_0|x_t]$ is a conventional step in the reverse diffusion process~(see Eq.~\ref{eq:flow_x0}).
With this, DPS adjusts samples $x_s$ in the diffusion process towards the data-consistent manifold:
\begin{equation}
    x_s -\nabla_{x_t}\|A\mathbb{E}[x_0|x_t]-y\|.
\end{equation}
The algorithm for implementing DPS is shown in Algorithm~\ref{alg:dps}.
It should be noted that DPS does not require the forward operator to be linear, only differentiable.

Many methods have built on this initial formulation from DPS.
We defer the reader to \cite{chung2025diffusionreview} for further reading, but briefly summarize a few methods here.
The denoising diffusion null-space model~(DDNM) does a state-wise correction with a hard projection for linear forward operators.
Diffusion plug-and-play image restoration~(DiffPIR) uses the proximal algorithm half-quadratic splitting to update diffusion states to be proximal to the observation.
Diffusion posterior proximal sampling~(DPPS) samples multiple candidates at each state $x_t$ and selects the one with the smallest distance to the data-consistent manifold.
Diffusion Maximum A Posteriori~(DMAP) performs multiple sampling steps to better approximate a MAP sample.

\section{Method}
\label{sec:method}
Our objective is to generalize diffusion priors to the multi-image super-resolution~(MISR) problem.
We first formalize the MISR observation model and derive a separable likelihood expression that permits multi-image data consistency without constructing a joint operator.
We then analyze the implications of this decomposition for DPS and describe MISR extensions of several existing solvers, including DPS, DMAP, DPPS, and diffusion-based PnP methods.

\subsection{MISR Preliminaries}
\label{sec:prelim}
We seek a single HR image $x\in\mathbb{R}^n$ that is consistent with multiple LR measurements $\{y_i\in\mathbb{R}^{m_i}\}_{i=1}^N$.
This yields a set of equations:
\begin{equation}
    y_i=A_ix+\epsilon_i,
\label{eq:multi-forward}
\end{equation}
where $A_i\in\mathbb{R}^{m_i\times n}$ and $\epsilon_i\sim\mathcal{N}(0, \sigma_i \mathbf{I}_{m_i})$.
Since all operations are linear, it is possible to rewrite Eq.~\ref{eq:multi-forward} as
\begin{equation}
    \underline{y} = \underline{A}x + \underline{\epsilon},
\end{equation}
where $\underline{y} = \begin{bmatrix} y_1 & \ldots & y_N \end{bmatrix}^\top$, $\underline{A} = \begin{bmatrix} A_1 & \ldots & A_N \end{bmatrix}^\top$, and $\underline{\epsilon} = \begin{bmatrix} \epsilon_1 & \ldots & \epsilon_N \end{bmatrix}^\top$.
Although \underline{A} is mathematically well-defined, in MRI each $A_i$ may operate on different grids with different slice geometries, making explicit stacking computationally impractical.

\subsection{Likelihood separability in DPS}
\label{sec:separability}
The joint negative log-likelihood of Eq.~\ref{eq:multi-forward} is
\begin{equation}
    \mathcal{L}(x) = \sum_{i=1}^N \frac{1}{2\sigma_i^2}\|A_ix-y_i\|_2^2,
\end{equation}
which is separable across observations.

\begin{proposition}\label{prop:separability}
Let $\mu_0(x_t) = \mathbb{E}[x_0\mid x_t]$ be the diffusion model's estimation of the data sample from time $t$. 
Then, the likelihood gradient to correct $x_t$ towards the data consistent manifold is
\begin{equation}
    \nabla_{x_t} \mathcal{L}(\mu_0(x_t)) = \sum_{i=1}^N \frac{1}{\sigma_i^2}\nabla_{x_t}\|A_i \mu_0(x_t) -y_i\|_2^2.
\label{eq:likelihood_sep}
\end{equation}
In particular, each $A_i$ contributes an independent correction direction, and the combined gradient is their sum.
\end{proposition}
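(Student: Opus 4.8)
The plan is to treat the statement as an essentially elementary consequence of two facts that are already in place: (i) the measurements are conditionally independent given $x$, so the joint negative log-likelihood is the finite sum $\mathcal{L}(x)=\sum_i \frac{1}{2\sigma_i^2}\|A_ix-y_i\|_2^2$ written just above the proposition; and (ii) differentiation is linear and commutes with a finite sum. The only genuinely non-elementary ingredient is the composition with the denoiser map $\mu_0$, so the crux is to verify that passing the gradient through $\mu_0$ does not couple the individual measurement terms. I would organize the argument as \emph{substitute}, then \emph{distribute}, then \emph{chain rule}.

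First I would substitute $x=\mu_0(x_t)$ into $\mathcal{L}$, giving the scalar $\mathcal{L}(\mu_0(x_t))=\sum_{i=1}^N \frac{1}{2\sigma_i^2}\|A_i\mu_0(x_t)-y_i\|_2^2$. Next, because the sum is finite and $\nabla_{x_t}$ is a linear operator, I would pull the gradient inside the sum, so that $\nabla_{x_t}\mathcal{L}(\mu_0(x_t))=\sum_{i=1}^N \frac{1}{2\sigma_i^2}\nabla_{x_t}\|A_i\mu_0(x_t)-y_i\|_2^2$. Finally I would evaluate each summand by the chain rule through $\mu_0$: writing $J(x_t)$ for the Jacobian of $\mu_0$ at $x_t$, the derivative of the squared residual is $2A_i^\top(A_i\mu_0(x_t)-y_i)$ pre-multiplied by $J(x_t)^\top$, so each term equals $\frac{1}{\sigma_i^2}J(x_t)^\top A_i^\top\!\left(A_i\mu_0(x_t)-y_i\right)$. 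The essential observation is that $J(x_t)$ carries no index $i$ — the denoiser is a single shared map — so the Jacobian factors out of the sum and what remains inside is purely per-measurement. Collecting terms yields the stated decomposition, with the overall multiplicative scalar folded into the DPS step size, as is standard.

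I do not expect a serious analytic obstacle; the content is structural rather than computational. The step that deserves the most care — and that I would flag as the real point of the proposition — is the claim that a single nonlinear denoising map can be differentiated once and shared across all measurements. Concretely, this is what lets one first form the cheap per-measurement residual correction $\sum_i \frac{1}{\sigma_i^2}A_i^\top(A_i\mu_0(x_t)-y_i)\in\mathbb{R}^n$ and then apply one vector--Jacobian product (a single backward pass through $f_\theta$, hence no extra network evaluations) rather than forming and differentiating the stacked operator $\underline{A}$. I would also make explicit the two assumptions the decomposition silently rests on: the factorization $p(\underline{y}\mid x)=\prod_i p(y_i\mid x)$ used to obtain the summed loss, which requires the noises $\epsilon_i$ to be independent; and Jensen's plug-in approximation $p(y_i\mid x_t)\approx p(y_i\mid \mu_0(x_t))$ inherited from DPS, under which the correction is defined in the first place. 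Neither introduces a difficulty, but both are worth stating so that the separability is seen to hold exactly at the level of the gradient once the loss is separable.
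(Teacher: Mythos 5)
Your proposal is correct and follows essentially the same route as the paper: conditional independence gives the summed loss, linearity of the gradient over a finite sum gives the decomposition, and the chain rule through the single shared map $\mu_0$ shows no coupling is introduced. You are in fact slightly more careful than the paper, both in making the Jacobian-transpose factorization explicit and in tracking the $\tfrac{1}{2\sigma_i^2}$ versus $\tfrac{1}{\sigma_i^2}$ constant, which the paper's proof silently drops and which you correctly fold into the step size.
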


\begin{proof}
By independence of measurements,
\begin{equation}
    \mathcal{L}(\mu_0) = \sum_{i=1}^N\mathcal{L}_i(\mu_0).
\end{equation}
Since the diffusion network $f_\theta$ is used only to generate $\mu_0(x_t)$ and the measurement operators $A_i$ are linear and do not couple measurements,
\begin{equation}
    \nabla_{x_t} \mathcal{L}(\mu_0(x_t)) = \sum_{i=1}^N \nabla_{x_t}  \mathcal{L}_i(\mu_0(x_t)).
\end{equation}
With substitution,
\begin{equation}
    \nabla_{x_t} \mathcal{L}(\mu_0(x_t)) = \sum_{i=1}^N \frac{1}{\sigma_i^2}\nabla_{x_t}\|A_i \mu_0(x_t) -y_i\|_2^2
\end{equation}
\end{proof}

\textbf{Remark.}\quad
Proposition~\ref{prop:separability} guarantees that MISR diffusion posterior sampling requires only independent per-measurement gradients without requiring a joint operator, modification of the diffusion model, nor an increase in NFEs.
The nonlinearity of $\mu_0$ does not break linear separability because dependence on $\mu_0$ flows through the chain rule.

The separability of likelihoods under DPS has several implications.
First, as we will demonstrate in this work, we can extend several approaches in the DPS family for MISR.
Second, likelihood separability can generalize to any components that are independent in $A$, not just multiple observations.
For example, in the SISR framework, we could split $A$ into independent rows and consider different regions of images as separate measurements.
This extends to further norm-preserving linear maps, and so block-stacked wavelet transform matrices could allow splitting by wavelet subband.
We leave this exploration for future work.
Finally, likelihood separability allows for independent weights for each component.
With even weights, all measurements are considered equally.
However, some measurements may be less reliable, for example, with more noise than others.

    

\begin{algorithm}[th]
\caption{DPS}
\label{alg:dps}
\begin{algorithmic}[1]
\Require $T$, $A$, $y$, $\zeta_t$
\State $x_1 = \mathcal{N}(0, I)$
\For{$t = 1$ to $0$ with $T$ steps}
    \State $x_s \sim p_\theta(X_s \mid x_t)$
    \State $\mu_0 := \mathbb{E}[X_0 \mid x_t]$
    \State $x_s = x_s - \zeta_t \nabla_{x_t} \| A\mu_0 - y \|$
\EndFor
\State \Return $x_0$
\end{algorithmic}
\end{algorithm}

\begin{algorithm}[th]
\caption{MISR DPS}
\label{alg:misr_dps}
\begin{algorithmic}[1]
\Require $T$, $\{A_i\}_{i=1}^N$, $\{y_i\}_{i=1}^N$, $\{\sigma_i\}_{i=1}^N$, $\zeta_t$
\State $x_1 = \mathcal{N}(0, I)$
\State $w_i := (\sigma_i^2 + \sigma_\mathrm{floor})^{-1}$
\State $w_i := N\cdot w_i / (\sum_{j=1}^Nw_j)$
\For{$t = 1$ to $0$ with $T$ steps}
    \State $x_s \sim p_\theta(X_s \mid x_t)$
    \State $\mu_0 := \mathbb{E}[X_0 \mid x_t]$
    
    \State $x_s = x_s - \zeta_t \nabla_{x_t} \sum_{i=1}^N\left[w_i\cdot\| A_i\mu_0 - y_i \|_2^2\right]$
\EndFor
\State \Return $x_0$
\end{algorithmic}
\end{algorithm}

\textbf{Measurement noise weighting}\quad
Each measurement may have its own noise strength; i.e., $\sigma_i\neq \sigma_j$ for $i\neq j$.
In such scenarios, it is disadvantageous to correct states towards all observations equally.
Thus, we propose to inversely scale the gradients by the strength of the noise for that observation.

\subsection{MISR inverse diffusion methods}
The previous sections provided a framework for MISR that is applicable to existing approaches.
Here, we outline how these methods can be modified for noise-weighted MISR.

\textbf{MISR DPS}\quad
We describe MISR DPS in Algorithm~\ref{alg:misr_dps}.
The gradient update term is weighted by $w_i$: $\nabla_{x_t}\sum_{i=1}^N\left[w_i\cdot\|A_i x_{0\mid t}-y_i\|_2^2\right ]$, where $w_i\propto(\sigma_i^2+\sigma_\mathrm{floor})^{-1}$ and $\sigma_\mathrm{floor}$ is a noise floor that the diffusion model $f_\theta$ is robust to.

\textbf{MISR DMAP}\quad
DMAP uses multiple samples and gradient steps to refine the approximation of state $s$ from state $t$.
Since it is algorithmically similar to DPS~(compare DPS in Algorithm~\ref{alg:dps} to DMAP in Algorithm~\ref{alg:dmap}), we are also able to generalize DMAP for MISR, shown in Algorithm~\ref{alg:misr_dmap}.

\textbf{MISR DPPS}\quad
To implement MISR for DDPS, the selected candidate must have the smallest distance to all data-consistent manifolds.

\textbf{MISR PnP}\quad
Plug-and-play~(PnP) methods that rely on priors are also amenable to MISR.
The alternating direction method of multipliers~(ADMM) uses variable splitting to optimize for data-consistency and prior-consistency~\cite{boyd2011distributed}.
The data-consistency term is still written as the squared $\ell_2$ norm, so Eq.~\ref{eq:likelihood_sep} is still applicable.
The same logic applies to DiffPIR, with half-quadratic splitting instead of ADMM.

\begin{algorithm}[th]
\caption{DMAP}
\label{alg:dmap}
\begin{algorithmic}[1]
\Require $T$, $K$, $A$, $y$, $\zeta_t$, $d$
\State $x_1 = \mathcal{N}(0, I)$
\For{$t = 1$ to $0$ with $T$ steps}    
    \State $x_s \sim p_\theta(X_s \mid x_t)$
    \State $\mu_s = \mathbb{E}[X_s \mid x_t]$
        \For{$j = 1$ to $K$} 
            \State $\mu_0 := \mathbb{E}[X_0 \mid x_s]$
            \State $x_s = x_s - \zeta_t \nabla_{x_s} \| A\mu_0 - y \|$
            \State $x_s = \mu_s - \sqrt{d}\sigma_t\frac{x_s - \mu_s}{\|x_s - \mu_s\|_2^2}$
        \EndFor
\EndFor
\State \Return $x_0$
\end{algorithmic}
\end{algorithm}

\begin{algorithm}[th]
\caption{MISR DMAP}
\label{alg:misr_dmap}
\begin{algorithmic}[1]
\Require $T$, $K$, $\{A_i\}_{i=1}^N$, $\{y_i\}_{i=1}^N$, $\{\sigma_i\}_{i=1}^N$, $\zeta_t$, $d$
\State $x_1 = \mathcal{N}(0, I)$
\State $w_i := (\sigma_i^2 + \sigma_\mathrm{floor})^{-1}$
\State $w_i := N\cdot w_i / (\sum_{j=1}^Nw_j)$
\For{$t = 1$ to $0$ with $T$ steps}
    \State $x_s \sim p_\theta(X_s \mid x_t)$
    \State $\mu_s = \mathbb{E}[X_s \mid x_t]$
        \For{$j = 1$ to $K$} 
            \State $\mu_0 := \mathbb{E}[X_0 \mid x_s]$
            \State $x_s = x_s - \zeta_t \nabla_{x_s}  \sum_{i=1}^N\left[w_i\cdot\| A_i\mu_0 - y_i \|_2^2\right]$
            \State $x_s = \mu_s - \sqrt{d}\sigma_t\frac{x_s - \mu_s}{\|x_s - \mu_s\|_2^2}$
        \EndFor
\EndFor
\State \Return $x_0$
\end{algorithmic}
\end{algorithm}

\section{Experiments and Results}
\label{sec:experiments}
Our experiments focus on super-resolution for anisotropic magnetic resonance head images.
We consider images that are $4\times$, $8\times$, and $16\times$ worse resolution along through-plane axes than in-plane axes.
For HR image volumes of $1\mathrm{~mm}^3$ isotropic resolution, these \textit{scale factors} correspond to $4$, $8$, and $16$ mm slice thicknesses.
To enable quantitative reference-based distortion metrics, we perform simulations of data using Gaussian slice selection profiles with full-width-at-half-max and slice separation equal to the scale factor.
Following Eq.~\ref{eq:multi-forward}, we also add Gaussian noise with $\sigma_i$ proportional to the voxel size due to the physics of how noise interacts with slice selection~\cite{prince2006medical}.
Thicker slices have less noise.
Specifically, we chose $\sigma_i=0.1 / k_i$, where $k_i$ is the scale factor for the LR image volume $y_i$.

\begin{figure*}[ht]
    \centering
    \includegraphics[width=0.95\linewidth]{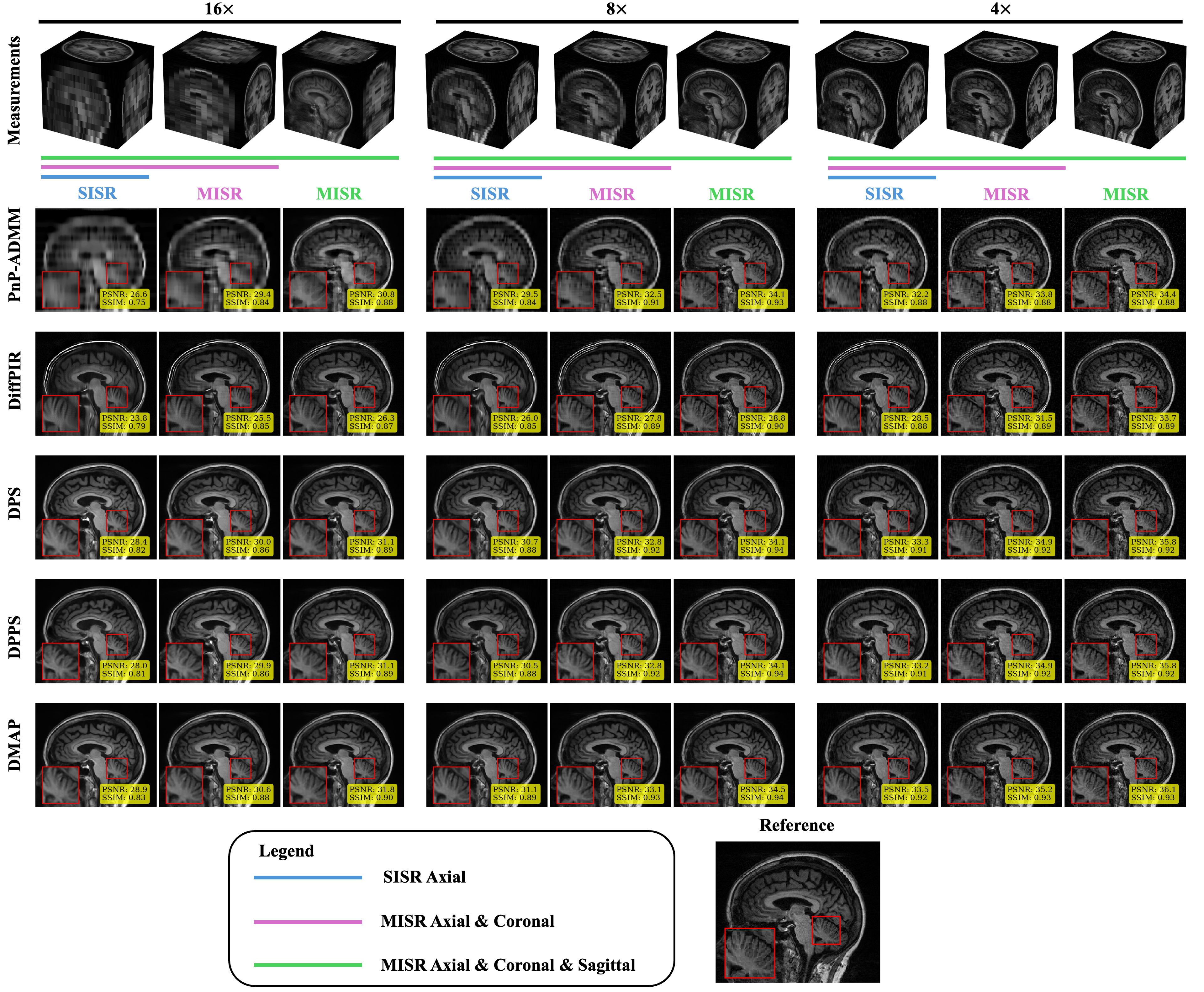}
    \caption{Qualitative results for a representative subject from the AIBL dataset. Row-wise labels designate LR inputs as ``Measurements'' and SR estimations named by method. Supercolumns group scale factors together. Within supercolumns, each column corresponds to SISR using only the axial acquisition as input~(blue), then MISR using axial and coronal~(pink), and MISR using axial, coronal, and sagittal~(green). The HR reference is shown at the bottom of the figure. For all methods, the same area highlighting the folia of the cerebellum is zoomed into the inset.}
    \label{fig:qual}
\end{figure*}

We used an open-source pre-trained non-latent 3D diffusion model for brain MRI~\cite{remedios2025diffusion}.
This model is a flow-estimation model, and we used $64$ DDIM steps for all methods and all experiments.
This model was trained on more than $70,000$ $T_1$-weighted brain volumes from 36 open datasets, and additionally withheld two sites from training: AIBL~\cite{dataset-aibl} and SLEEP~\cite{dataset-sleep}.
Thus, to avoid model bias, we selected 50 subjects with $1\mathrm{~mm}^3$ resolution from each of those two sites are our data cohort.
For preprocessing, these image volumes were padded and/or cropped to a voxel size of $192\times224\times192$ using the center-of-mass location from an HD-BET~\cite{Isensee-hdbet}-calculated brain mask, then linearly normalized to the intensity range $[-1, 1]$ using the volume's minimum and maximum values.

We conducted two experiments.
First, we compared SISR to MISR with orthogonal planes.
Then, we perform an ablation study and compare noise-weighting to uniform weighting in MISR DPS and MISR DMAP.
For evaluation, we used PSNR, SSIM, and when comparing SISR to MISR we additionally calculated the Frechet Inception Distance~(FID)~\cite{heusel2017gans,Seitzer2020FID}.
To enable the computation of FID for 3D volumetric images, we extract 2D slices from each volume from all three cardinal planes, separated by 4 mm, and spanning the center 128 mm of the volume.
These slices were linearly normalized to the intensity range $[0, 255]$ using the volume's minimum and maximum values, quantized to 8-bit integers, and saved to disk as \texttt{.png} files.
The reference dataset for the FID computations for the AIBL cohort was the SLEEP cohort, and vice-versa.
This was done to prevent an abnormally low FID score due to similar intensity statistics between, for example, true-HR-AIBL and super-resolved-AIBL images.

\begin{figure*}[ht]
    \centering
    \includegraphics[width=\linewidth]{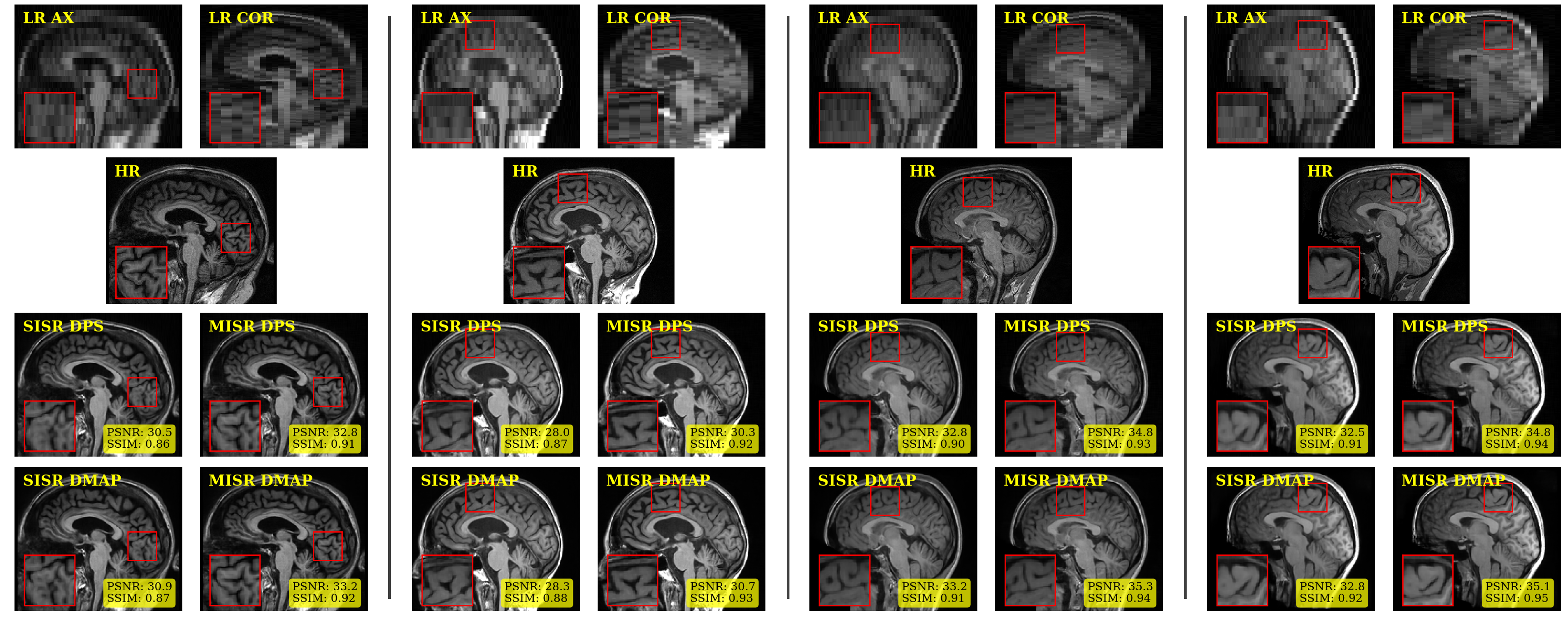}
    \caption{Sagittal slices from four representative subjects are shown for $8\times$ scale factor LR inputs. Each displayed image is labeled for its contents. LR: low-resolution; AX: axial acquisition; COR: coronal acquisition; HR: ground-truth high-resolution; SISR: single-image super-resolution, using the axial acquisition as input; MISR: multi-image super-resolution, using the axial and coronal acquisitions as input. Zoomed inset regions are located by the red box, highlighting anatomical differences that are not correctly recovered by SISR.}
    \label{fig:multi-qual}
\end{figure*}

\subsection{SISR vs MISR}
In Fig.~\ref{fig:teaser} and Fig.~\ref{fig:qual}, we show qualitative comparisons between slices of representative volumes for SISR and MISR across several methods.
We emphasize that all methods operate directly in 3D, and 2D slices are shown for clarity.
For MISR scenarios, there are more LR images to use as input, leading to improved results for all methods and scale factors.

In Fig.~\ref{fig:multi-qual}, we show four more representative subjects at $8\times$ scale factor.
With a second observation, these regions have more correct anatomy.
However, the distortion metrics do not reflect geometric or anatomical errors; the approximately 2 to 3 dB difference in PSNR is not sufficient to describe the improvements gained from another measurement.


\begin{sidewaystable*}[htbp]
\caption{Quantitative results for 100 subjects. For PSNR and SSIM, the mean$\pm$std. are reported. FID is computed as mentioned in Sec.~\ref{sec:experiments} Best results by scale factor are \textbf{bolded} and second-best results are \underline{underlined}.}
\label{tab:quantitative_results}
\centering
\renewcommand{\arraystretch}{1.2}
\setlength{\tabcolsep}{6pt}
\begin{tabular}{c|l|ccc|ccc|ccc}
\toprule
& & \multicolumn{3}{c|}{\textbf{SISR}} & \multicolumn{6}{c}{\textbf{MISR}} \\
\cmidrule(lr){3-5} \cmidrule(lr){6-11}
& & \multicolumn{3}{c|}{Axial} & \multicolumn{3}{c|}{Axial \& Coronal} & \multicolumn{3}{c}{Axial \& Coronal \& Sagittal} \\
\cmidrule(lr){3-5} \cmidrule(lr){6-8} \cmidrule(lr){9-11}
\textbf{Scale} & \textbf{Method} & PSNR ($\uparrow$) & SSIM ($\uparrow$) & FID ($\downarrow$) & PSNR ($\uparrow$) & SSIM ($\uparrow$) & FID ($\downarrow$) & PSNR ($\uparrow$) & SSIM ($\uparrow$) & FID ($\downarrow$) \\
\midrule
\multirow{5}{*}{\textbf{$\,4\times\,$}} & PnP-ADMM & $31.39 \pm 1.42$ & $0.880 \pm 0.014$ & $98.65$ & $33.30 \pm 1.11$ & $0.883 \pm 0.009$ & $84.97$ & $34.17 \pm 0.91$ & $0.884 \pm 0.008$ & $85.38$ \\
 & DiffPIR & $27.92 \pm 1.26$ & $0.888 \pm 0.014$ & $65.44$ & $30.23 \pm 1.19$ & $0.895 \pm 0.009$ & $58.15$ & $32.14 \pm 1.25$ & $0.896 \pm 0.007$ & $53.97$ \\
 & DPS & $\underline{32.78 \pm 2.03}$ & $\underline{0.918 \pm 0.017}$ & $41.28$ & $\underline{34.59 \pm 1.73}$ & $\underline{0.928 \pm 0.012}$ & $47.50$ & $35.52 \pm 1.44$ & $\underline{0.927 \pm 0.009}$ & $\underline{51.76}$ \\
 & DPPS & $32.76 \pm 2.04$ & $\underline{0.918 \pm 0.017}$ & $\underline{41.19}$ & $34.59 \pm 1.74$ & $\underline{0.928 \pm 0.012}$ & $\underline{47.30}$ & $\underline{35.54 \pm 1.42}$ & $\underline{0.927 \pm 0.009}$ & $51.83$ \\
 & DMAP & $\mathbf{33.08 \pm 2.14}$ & $\mathbf{0.929 \pm 0.018}$ & $\mathbf{32.07}$ & $\mathbf{34.97 \pm 1.91}$ & $\mathbf{0.940 \pm 0.014}$ & $\mathbf{34.77}$ & $\mathbf{35.98 \pm 1.64}$ & $\mathbf{0.940 \pm 0.012}$ & $\mathbf{37.18}$ \\
\midrule
\multirow{5}{*}{\textbf{$\,8\times\,$}} & PnP-ADMM & $28.70 \pm 1.50$ & $0.844 \pm 0.027$ & $158.53$ & $31.51 \pm 1.59$ & $0.914 \pm 0.017$ & $85.09$ & $33.27 \pm 1.67$ & $0.936 \pm 0.013$ & $83.50$ \\
 & DiffPIR & $25.18 \pm 1.29$ & $0.857 \pm 0.024$ & $91.66$ & $26.91 \pm 1.25$ & $0.898 \pm 0.016$ & $78.43$ & $27.82 \pm 1.30$ & $0.910 \pm 0.013$ & $70.85$ \\
 & DPS & $\underline{30.04 \pm 2.11}$ & $\underline{0.878 \pm 0.027}$ & $39.58$ & $\underline{32.21 \pm 2.07}$ & $\underline{0.922 \pm 0.019}$ & $36.25$ & $\underline{33.65 \pm 2.13}$ & $\underline{0.940 \pm 0.016}$ & $34.38$ \\
 & DPPS & $30.00 \pm 2.12$ & $0.877 \pm 0.027$ & $\underline{39.03}$ & $32.18 \pm 2.08$ & $0.922 \pm 0.019$ & $\underline{35.84}$ & $33.64 \pm 2.15$ & $\underline{0.940 \pm 0.016}$ & $\underline{34.26}$ \\
 & DMAP & $\mathbf{30.36 \pm 2.16}$ & $\mathbf{0.888 \pm 0.026}$ & $\mathbf{38.97}$ & $\mathbf{32.63 \pm 2.13}$ & $\mathbf{0.931 \pm 0.019}$ & $\mathbf{35.51}$ & $\mathbf{34.17 \pm 2.22}$ & $\mathbf{0.947 \pm 0.016}$ & $\mathbf{32.66}$ \\
\midrule
\multirow{5}{*}{\textbf{$\,16\times\,$}} & PnP-ADMM & $25.82 \pm 1.33$ & $0.751 \pm 0.037$ & $217.75$ & $28.53 \pm 1.62$ & $0.843 \pm 0.028$ & $126.23$ & $29.98 \pm 1.75$ & $0.881 \pm 0.024$ & $110.77$ \\
 & DiffPIR & $23.07 \pm 1.21$ & $0.793 \pm 0.031$ & $116.23$ & $24.68 \pm 1.19$ & $0.856 \pm 0.022$ & $96.28$ & $25.46 \pm 1.20$ & $0.882 \pm 0.019$ & $85.93$ \\
 & DPS & $\underline{27.56 \pm 1.95}$ & $\underline{0.811 \pm 0.036}$ & $44.61$ & $\underline{29.26 \pm 1.99}$ & $\underline{0.859 \pm 0.028}$ & $43.93$ & $\underline{30.50 \pm 2.08}$ & $\underline{0.888 \pm 0.024}$ & $42.75$ \\
 & DPPS & $27.47 \pm 1.94$ & $0.810 \pm 0.037$ & $\mathbf{43.98}$ & $29.25 \pm 2.01$ & $\underline{0.859 \pm 0.029}$ & $\mathbf{42.99}$ & $30.47 \pm 2.08$ & $\underline{0.888 \pm 0.024}$ & $\underline{41.96}$ \\
 & DMAP & $\mathbf{28.04 \pm 2.03}$ & $\mathbf{0.826 \pm 0.036}$ & $\underline{44.43}$ & $\mathbf{29.87 \pm 2.03}$ & $\mathbf{0.877 \pm 0.026}$ & $\underline{43.50}$ & $\mathbf{31.25 \pm 2.14}$ & $\mathbf{0.905 \pm 0.023}$ & $\mathbf{41.89}$ \\
\bottomrule
\end{tabular}
\end{sidewaystable*}

We show quantitative results for our 100 subjects in Table~\ref{tab:quantitative_results}.
In correspondence with the qualitative results from Fig.~\ref{fig:qual}, DMAP outperforms all other methods across the board, both in distortion metrics and FID.
An interesting result in Table~\ref{tab:quantitative_results} is the metrics' trends when adding more LR images.
As expected, PSNR and SSIM improve in all cases.
However, for the $4\times$ scale factor, the FID score generally worsens for the DPS family of methods as more images are added.
This is potentially due to the perception-distortion tradeoff~\cite{blau2018perception}; since the SR image $\hat{x}$ is guided towards being consistent with multiple images, its freedom in being a realistic image is restricted.
However, this phenomenon is not present at $8\times$ and $16\times$ scale factors.

\subsection{Ablation: noise-weighting}
\label{sec:ablation}
In the previous experiment, each of the LR images used for MISR had the same scale factor.
The noise present in the images had the same power, so the noise weights $w_i$ were equal for all $i$ and essentially absorbed into the gradient scalar $\zeta_t$.
To evaluate the efficacy of the proposed noise weighting, we compared with and without noise-weighting for mixed-resolution inputs: $4\times$ and $16\times$ scale factors for two orthogonal inputs.
We used MISR DPS and MISR DMAP for this evaluation and varied the noise by $\sigma_i=\sigma_\mathrm{base} / k_i$, for $\sigma_\mathrm{base}=0.15, 0.3, 0.45$.
These values correspond to approximately $1\times$, $2\times$, and $3\times$ the average intensity standard deviation in HR images~(after normalization to $[-1, 1]$).

In Table~\ref{tab:ablation}, we show the results of this experiment.
For both MISR DPS and MISR DMAP, noise-weighting improves all metrics.
As the noise level becomes more severe, the effect of noise weighting becomes larger.

\begin{table}[htbp]
\centering
\caption{Quantitative results for our ablation without noise weighting~(w/o NW) and with noise weighting~(w/ NW) for the two-image $4\times$ and $16\times$ scale factors for both MISR DPS and MISR DMAP.}
\label{tab:ablation}
\setlength{\tabcolsep}{3pt}

\begin{tabular}{l|c|cc|cc}
\toprule
 & & \multicolumn{2}{c|}{\textbf{w/o NW}} & \multicolumn{2}{c}{\textbf{w/ NW}} \\
 & $\sigma_\mathrm{base}$ & PSNR ($\uparrow$) & SSIM ($\uparrow$) & PSNR ($\uparrow$) & SSIM ($\uparrow$) \\
\midrule

\multirow{3}{*}{\rotatebox[origin=c]{90}{\textbf{DPS}}}
 & 0.15 & $32.5 \pm 2.4$ & $0.90 \pm 0.01$ & $32.5 \pm 2.4$ & $0.90 \pm 0.01$ \\
 & 0.3 & $29.8 \pm 1.6$ & $0.78 \pm 0.01$ & $29.9 \pm 1.6$ & $0.79 \pm 0.01$ \\
 & 0.45 & $27.0 \pm 1.1$ & $0.63 \pm 0.02$ & $27.5 \pm 1.2$ & $0.66 \pm 0.02$ \\
\midrule

\multirow{3}{*}{\rotatebox[origin=c]{90}{\textbf{DMAP}}}
 & 0.15 & $32.9 \pm 2.6$ & $0.91 \pm 0.02$ & $32.9 \pm 2.6$ & $0.91 \pm 0.02$ \\
 & 0.3 & $30.0 \pm 1.6$ & $0.80 \pm 0.01$ & $30.1 \pm 1.7$ & $0.80 \pm 0.01$ \\
 & 0.45 & $27.2 \pm 1.2$ & $0.65 \pm 0.01$ & $27.5 \pm 1.3$ & $0.67 \pm 0.01$ \\
\bottomrule
\end{tabular}

\end{table}

\section{Conclusion}
\label{sec:conclusion}
We presented an approach for MISR using diffusion models.
We showed that likelihood separation underneath the diffusion trajectory allows for a flexible multi-image framework and implemented it with noise-weighting for the diffusion posterior sampling family of inverse problem solvers.
We conducted experiments at various scale factors with orthogonal acquisition planes and showed that MISR outperforms SISR both qualitatively and quantitatively, and additionally better preserves anatomy.
Future work will address misalignment between measurements~(though an estimated registration can be folded in to the measurement $A$) as well as multi-contrast acquisitions.
To the best of our knowledge, the results of this work are state-of-the-art in the field of MRI super-resolution for anisotropic image volumes.
Our results indicate that MISR diffusion could make standard 2D MRI acquisitions usable in all viewing planes, potentially reducing the need for long, thin-slice 3D scans and restoring image analytic quality for retrospectively accessed imaging sessions.
\clearpage
{
    \small
    \bibliographystyle{ieeenat_fullname}
    \bibliography{main}
}


\end{document}